\documentclass[10pt,letterpaper]{article}
\usepackage[utf8]{inputenc}
\usepackage[T1]{fontenc}
\usepackage{times,microtype}
\usepackage[textheight=9in,textwidth=5.5in,top=1in,footskip=30pt]{geometry}
\usepackage[svgnames]{xcolor}
\usepackage{caption}
\usepackage{booktabs}
\usepackage{array}
\usepackage{float}
\usepackage{amsfonts}
\usepackage{graphicx}
\usepackage{amsmath,amssymb,mathtools,amsthm}
\usepackage[numbers,compress,sort]{natbib}
\usepackage[backref=page]{hyperref}
\hypersetup{
    pdftitle={Global Communication or Graph-Specific Memory?},
    pdfauthor={Hamed Shirzad and Danica J. Sutherland},
    pdfcreator={LaTeX},
    bookmarksnumbered,
    pdfdisplaydoctitle,
    colorlinks=true,
    breaklinks=true,
    citecolor=NavyBlue,
    linkcolor=NavyBlue,
    urlcolor=NavyBlue
}
\usepackage[capitalize,noabbrev]{cleveref}
\crefname{appendix}{Appendix}{Appendices}
\Crefname{appendix}{Appendix}{Appendices}

\newtheorem{proposition}{Proposition}

\theoremstyle{definition}

\theoremstyle{remark}

\newcommand{\R}{\mathbb{R}}

\title{Global Communication or Graph-Specific Memory?}

\author{Hamed Shirzad\\
University of British Columbia\\
\texttt{shirzad@cs.ubc.ca}
\and
Danica J. Sutherland\\
University of British Columbia\\
\texttt{dsuth@cs.ubc.ca}
}
\date{}

\begin{document}

\maketitle

\begin{abstract}
    Scalable Graph Transformers are commonly trained and evaluated on static large graphs in a transductive setup. Many scalable Graph Transformer components can be formulated as a constant-size shared memory, similar to virtual nodes, providing compressed information about the whole graph. The counterpart of these models in language models and other domains is justified as the input changes, and this mechanism learns to compress some useful information about the input. In transductive learning on a single fixed graph, however, any shared memory can be seen as a constant at test time.
    This raises the question of what exactly this shared memory does in this static setup.
    We give preliminary evidence that optimizing
    a shared memory directly performs similarly to global communication methods,
    and so normal local message-passing models can embed similar information in their weights.
    Thus, these settings may be a poor fit for evaluating global communication in graph neural networks.
\end{abstract}

\section{Introduction}\label{sec:intro}

Graph Transformers (GTs) are often motivated by their ability to conduct long-range communication across a large input graph,
thereby modeling non-local dependencies which are difficult for more local models to learn
\citep{dwivedi2022long,wu2022nodeformer,wu2023difformer,rampasek2022gps,shirzad2023exphormer,shirzad2024even}.
In practice, however,
these models are commonly evaluated by
full-batch transductive node classification
on a single graph
\citep{yang2016planetoid,hu2020ogb,platonov2023critical}.
As these long-range messages will only ever be passed on one graph,
it is natural to ask:
are they really necessary?

The clearest case is the use of virtual nodes in a Graph Neural Network (GNN)
\citep{gilmer2017neural,rampasek2022gps,shirzad2023exphormer},
where all nodes are connected to a global fixed-sized shared memory.
We can view this as consisting of two mechanisms:
all nodes collectively write into a shared memory,
constructing some compressed representation of (aspects of) the whole graph;
at the next layer, each node reads from this shared memory,
learning about the whole input.
As we will discuss later,
linear and kernelized linear attention models \citep[e.g.][]{katharopoulos2020linear,choromanski2021performer} variants of efficient Transformers
can be mathematically reformulated into an analogous writer/reader setup.

These efficient attention models were originally developed for sequential data trained on many varied inputs, such as processing different textual documents,
and thus different information is written and read when the model is applied to different documents;
we can think of this as a mapping $I \mapsto c_\theta(I)$.
In a transductive fixed-graph setting, as typically used for evaluating GTs, there is only one fixed graph input $I^*$.
For a given setting of model parameters $\theta$,
$c^* = c_\theta(I^*)$
is fixed:
the function $c_\theta$ itself matters only in what single vector $c^*$ it produces.
Thus, at inference time, we can think of $c^*$ simply as a parameter of the model.
This raises the questions:
\begin{itemize}
    \item Can an equally useful shared memory $c^*$ be learned directly, e.g.\ with gradient descent on a learnable matrix?
    \item Do the performance of these models in this static transductive setup imply anything about their ability to do long-range dependency modeling?
    \item What are the optimization and inductive bias implications?
    \item Can other GNNs learn to embed this cached memory into their weights, without an explicit shared memory mechanism?
    \item Do these shared writers add value when the input graph changes in small ways?
\end{itemize}

Prior empirical work has suggested that simple GNNs based on local message-passing
can outperform GT variants;
\citet{luo2024classic} specifically study this,
while the SGFormer results of \citet{liang2026citynetworks} also often show these models fall behind basic GNNs.
Our paper suggests a plausible explanation:
the evaluation framework fundamentally cannot tell what role, if any, the writer plays during learning.
Our class of shared-memory models does not cover all scalable GTs;
for instance, Exphormer \citep{shirzad2023exphormer}
is outside of it (as discussed in \cref{app:architectural-scope}),
and indeed it  achieves the best results in \cite{liang2026citynetworks}.

This paper is an early investigation toward the previous questions. We do provide some initial experimental and theoretical results in this work, but our focus is on clearly elaborating the potential problem to the community, and encouraging more caution in what static setups are capable of saying about learning long-range dependencies.

\begin{figure}[t]
\centering
\includegraphics[width=\linewidth]{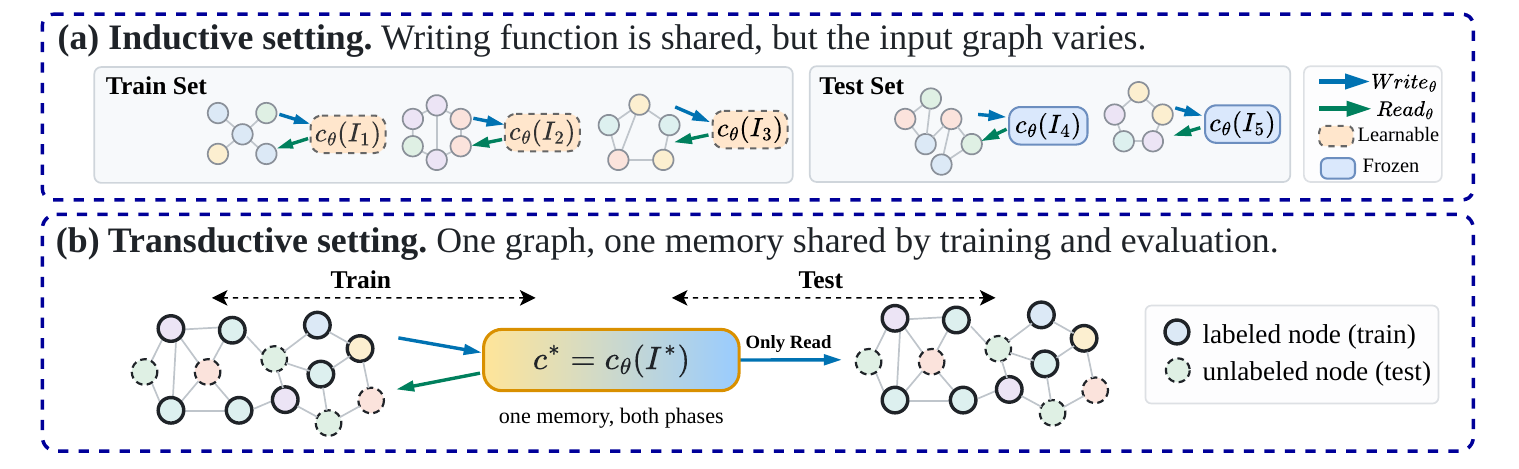}
\caption{\textbf{The role of a shared writer depends on the input domain.} (a) In the inductive setting, the writer is trained across multiple graphs and must produce an input-specific memory for each new test graph. (b) In the static transductive setting, training and evaluation use the same graph input ($I^*$). After training, its memory ($c^*=c_\theta(I^*)$) is fixed and can be recorded and reused at evaluation, while leaving the reader unchanged.}
\label{fig:identification}
\end{figure}

\section{From Virtual Nodes to Compact Shared Memory}\label{sec:setup}

\paragraph{Virtual nodes as explicit shared memory.}
Virtual nodes gather information from all nodes in the graph, and provide a shared memory.
All nodes can write information to this memory in the first layer; later layers can read from the shared memory and update it in each layer. Some GNN variants use one or more virtual nodes \citep{gilmer2017neural,shirzad2023exphormer}. Let their combined state at layer $\ell$ be $c^{(\ell)}(I)\in\R^M$,
where $M$ is the embedding size times the number of virtual nodes:
\[
\begin{aligned}
c^{(\ell)}(I)
&=\operatorname{Write}^{(\ell)}\!\left(
  \big(z_u^{(\ell-1)}(I)\big)_{u\in V},c^{(\ell-1)}(I)\right),\\
z_v^{(\ell)}(I)
&=\operatorname{Read}^{(\ell)}_v\!\left(
  \operatorname{Local}^{(\ell)}_v(I),c^{(\ell)}(I)\right),
\end{aligned}
\]

Where $V$ is the set of all nodes in the graph. The writer compresses the current graph state; the reader decides how each node uses it.

\paragraph{Kernelized linear attention as implicit shared memory.}
Kernelized attention implements a write-memory-read pattern without an explicit virtual node
\citep{katharopoulos2020linear,schlag2021linear,choromanski2021performer}.
Let $z_u\in\mathbb R^{d_h}$, let
$f_q,f_k:\mathbb R^{d_h}\to\mathbb R^{d_k}$ be query and key feature maps, and let
$f_v:\mathbb R^{d_h}\to\mathbb R^{d_v}$ be the value map.
Treating vectors as columns, the layer writes
\begin{equation}\label{eq:kernel-write}
S=\sum_u f_k(z_u)f_v(z_u)^\top
\in\mathbb R^{d_k\times d_v},
\qquad
\zeta=\sum_u f_k(z_u)
\in\mathbb R^{d_k},
\end{equation}
and node $v$ reads the same pair through its own query:
\begin{equation}\label{eq:kernel-read}
o_v=
\frac{S^\top f_q(z_v)}
     {\zeta^\top f_q(z_v)}
\in\mathbb R^{d_v}.
\end{equation}
In implementation, the denominator will be regularized or stabilized away from zero.

Different choices of the feature maps give different linear-attention variants. Taking $f_q$ and $f_k$ to be learned linear maps gives normalized softmax-free dot-product attention. If normalization is omitted, only $S$ is required. Positive or random feature maps give other kernelized variants, including linear-time approximations of softmax attention.

The pair $(S,\zeta)$ is a fixed-sized shared memory
$c=(S,\zeta) \in\mathbb R^{d_kd_v+d_k}$.
During training, this memory changes as the model parameters and node states change. After the parameters are fixed, each layer produces one fixed memory state on a given layer. For a multilayer model, we can see the combination of memories across all layers as the memory of the model:
$c = \bigl(c^{(1)},\ldots,c^{(L)}\bigr).$
If we remove the writing mechanism and make the reader mechanism directly use the stored $c$, it is easy to see that the model would produce identical outputs during inference time.

\section{What Constant-size Shared Memory Implies in a Transductive Setting}
\label{sec:collapse}

We denote the complete model input by $I=(G,X)$, where $G$ includes the graph and edge information, and $X$ includes the node features and any fixed positional or structural encodings. In common full-batch transductive node-classification benchmarks, the model is trained and evaluated using the same graph and features \citep{yang2016planetoid,kipf2017semi,hu2020ogb,platonov2023critical}. Only a subset of nodes is used during training, while other subsets are used for validation and test inference.

At inference time, we can remove the writing mechanism and make the reader use the stored shared memory $c$ at each layer to obtain exactly the same outputs. We call this \emph{exact replay}. While the mathematical argument is straightforward, we also test this by explicitly making these replacements across four models: Nodeformer (with fixed RF) \citep{wu2022nodeformer}, Difformer-s \citep{wu2023difformer}, SGFormer \citep{wu2023sgformer}, and Polynormer-r \citep{deng2024polynormer} across datasets Cora, Amazon-Photo, Pubmed, Actor \citep{sen2008collective,platonov2023critical,lim2021large,shchur2018pitfalls}, and observe all final logits are equal up to the computational stability threshold, as expected.

Another observation is that fixing $c^*$ after training at each layer also turns the reader at that layer into a shared \textit{nodewise} function
\begin{equation}
\label{eq:nodewise-read}
g^*(a) = \operatorname{Read}^*(a,c^*),
\end{equation}
where $a$ contains the retained local inputs of a node. Keeping this function gives an exact writer-free implementation, with a nodewise reader function at inference time.

\paragraph{Direct learnability of the shared memory.}
What if we eliminate the writing mechanism and learn $c$ directly using optimization algorithms alongside the other learnable parameters? Can we learn a useful shared memory through direct optimization? The number of learnable parameters in each layer according to \cref{eq:kernel-write} is $\Theta(d_h(d_v+d_k))$, while directly learning $c$ would require $\Theta(d_kd_v)$ parameters. In typical networks, $d_v$ (concatenated across the heads) and $d_h$ are the same or at least comparable, so the number of learnable parameters is comparable in the two setups. This raises the question of whether we can learn these parameters directly, while saving the writing computation in each forward pass. We test this experimentally across several architectures and datasets. The results provided in \cref{fig:learned-memory-bars} show that directly learning this shared memory is feasible, and gives comparable results to the original model. Experimental details are in \cref{sec:exp-learned-memory}.

\begin{figure*}[t]
    \centering
    \includegraphics[width=\textwidth]{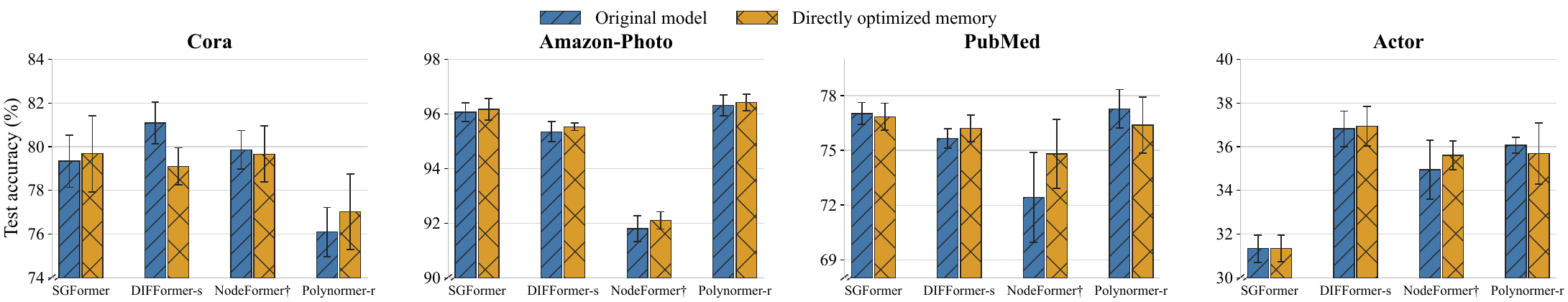}
    \caption{\textbf{Original models versus learned shared memories.}
    Bars show mean test accuracy over five matched seeds. Whiskers show marginal
    $95\%$ Student-$t$ confidence intervals. The learned-memory model has a
    higher mean in eleven of sixteen comparisons. Fourteen of the sixteen
    paired $95\%$ intervals for the memory-minus-original difference contain
    zero. The two exceptions have opposite signs: DIFFormer-s on Cora favors
    the original model by $2.0$ points, while NodeFormer on PubMed favors
    learned memory by $2.4$ points.}
    \label{fig:learned-memory-bars}
\end{figure*}

\paragraph{Hypothesis class of local GNNs and Kernelized Linear Graph Transformers.}
In \cref{eq:nodewise-read}, we saw that the read operation can be viewed as a nodewise operation after learning $c^*$. More particularly, \cref{eq:kernel-read} is a combination of $f_q$ and a linear map of $S$, normalized by a $\zeta$ dot product. Thus, from the perspective of representational strength, a sufficiently expressive nodewise operation at the end of each local message-passing layer can reproduce exactly the same output. This means that there exists a class of local GNNs whose hypothesis class contains that of these Graph Transformers. We formalize this mathematically in \cref{theory:hypothesis-class}. A sufficiently wide graph-specific Multi-Layer Perceptron (MLP), such as variants used in Graph Isomorphism Networks (GIN), can also interpolate this function at the finitely many node contexts encountered in each layer and reproduce the same predictions \citep{xu2018powerful}.

This result provides one possible explanation for the strong classical GNN results reported in earlier work. \citet{luo2024classic} show that carefully tuned GCN, GAT, and GraphSAGE models-with enough capacity-match or outperform recent Graph Transformer results on most of the node-classification datasets they study. \citet{liang2026citynetworks} also report classical GNN baselines that are close to or above SGFormer on their newly introduced long-range benchmarks. These comparisons do not show that the models learn the same function or use the same mechanism. Our result only shows why such predictive parity is possible on a fixed graph in a transductive setup. Whether gradient descent can reliably find the corresponding local GNN remains an open question.

\paragraph{Optimization perspective.}
The previous observations concern the representational capacity of the models; learnability is less clear. Can typical gradient-based optimization algorithms learn a useful graph-specific memory directly? Can a local message-passing GNN encode the same useful global information in its shared weights? The global writer may still affect optimization, regularization, and generalization even when its final memory can be replayed exactly. In \cref{fig:learned-memory-bars}, directly learned memories are often competitive with the original models. This shows that direct optimization can work in practice, but does not establish a general equivalence. Our initial theory in \cref{theory:optimization} explains why no unconditional equivalence is possible. It also gives exact recovery conditions for affine readers and shows that an affine writer acts as a preconditioner for learning the state. More general learnability guarantees for nonlinear multilayer models and ordinary GNN training remain open.

\paragraph{How generalizable is the writer against changes in the input?}
While the transductive setup commonly used in this area does not include changes to the input $I$ between training, validation, and test phases, one can view this as a simplification of real-world tasks. In settings such as social networks, for example, the graph may change over time, while we still expect the trained model to work under small changes to the input. In this case, we have an input graph $I_{\mathrm{train}}$ observed during training, but during the test phase, small changes may occur either in the graph structure $G$ or in the node features $X$. The writer has been trained on only one instance, but it may still generalize to sufficiently close variants. While this is possible, it is not clear if the model will learn a generalizable memory there from a single graph. We did a preliminary experiment described in \cref{sec:exp-perturbations}. Across Cora and Actor, recomputing the writer gives little consistent advantage over replay under feature replacement or edge rewiring. Training on perturbed inputs helps some configurations.
Future work should explore when the writer mechanism can generalize, and when it does not.

\Cref{fig:input-perturbations} shows these results; the full protocol is given in \cref{sec:exp-perturbations}.

\begin{figure*}[t]
    \centering
    \includegraphics[width=\textwidth]{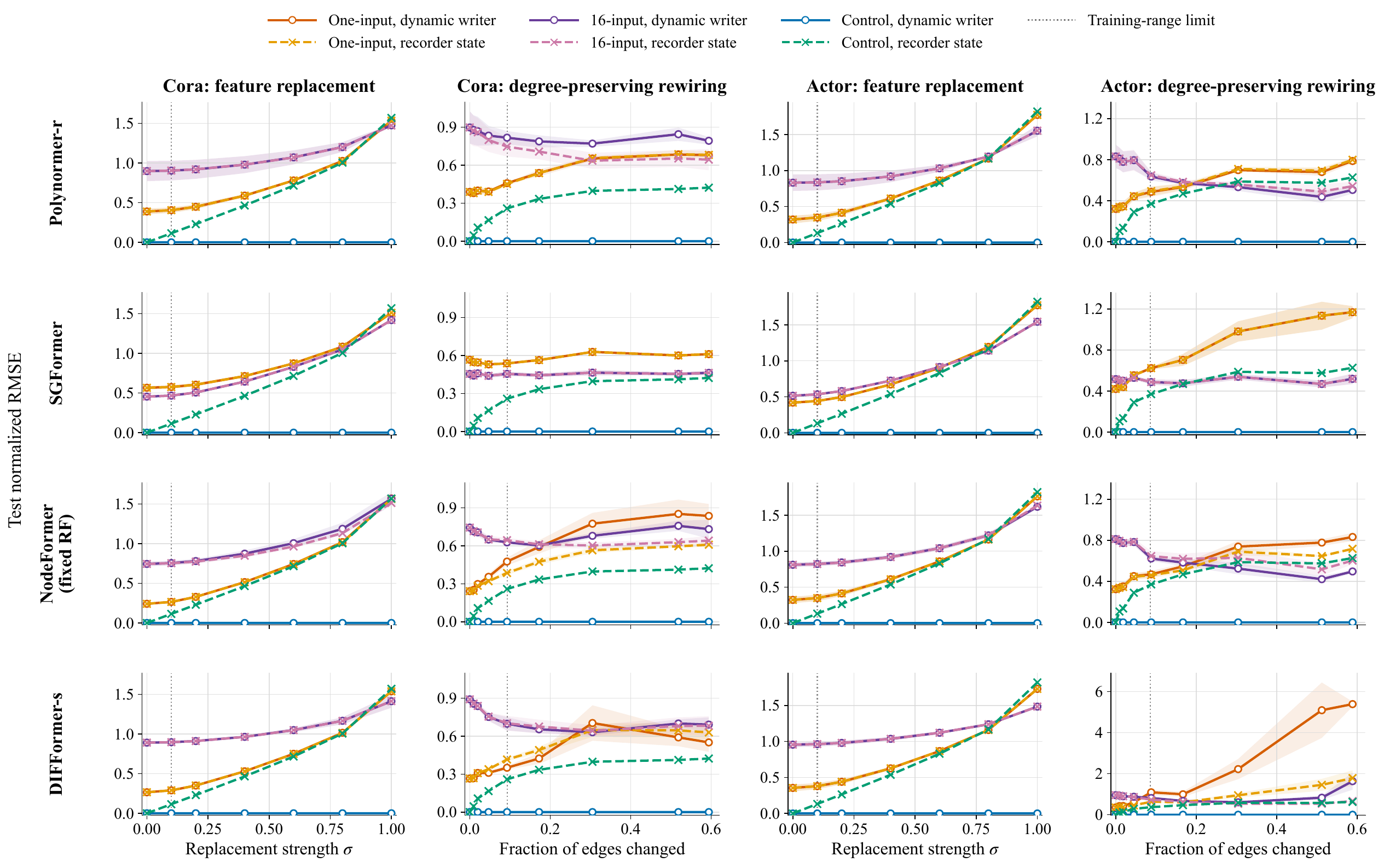}
    \caption{\textbf{Response to feature and topology changes.}
    Solid lines recompute the current writer state. Dashed lines reuse the clean recorder state. Orange curves use one clean training input, purple curves use one checkpoint trained on \(16\) nearby inputs, and the blue and green curves show the aligned writer-only control. Shaded regions show variation across three optimization seeds. The dotted line marks the largest marginal perturbation used during multi-input training.}
    \label{fig:input-perturbations}
\end{figure*}

\paragraph{Experiments on fixed long-range tasks.}
\Cref{fig:static-long-range} compares the original models, directly optimized memory, and a two-hop GNN on a fixed target with known long-range dependence; details are given in \cref{app:static-long-range}. \Cref{fig:city-memory} extends the original-versus-memory comparison to the Paris, Shanghai, and Los Angeles City-Networks datasets \citep{liang2026citynetworks}; details are given in \cref{app:city-memory}.

\begin{figure*}[t]
    \centering
    \includegraphics[width=\textwidth]{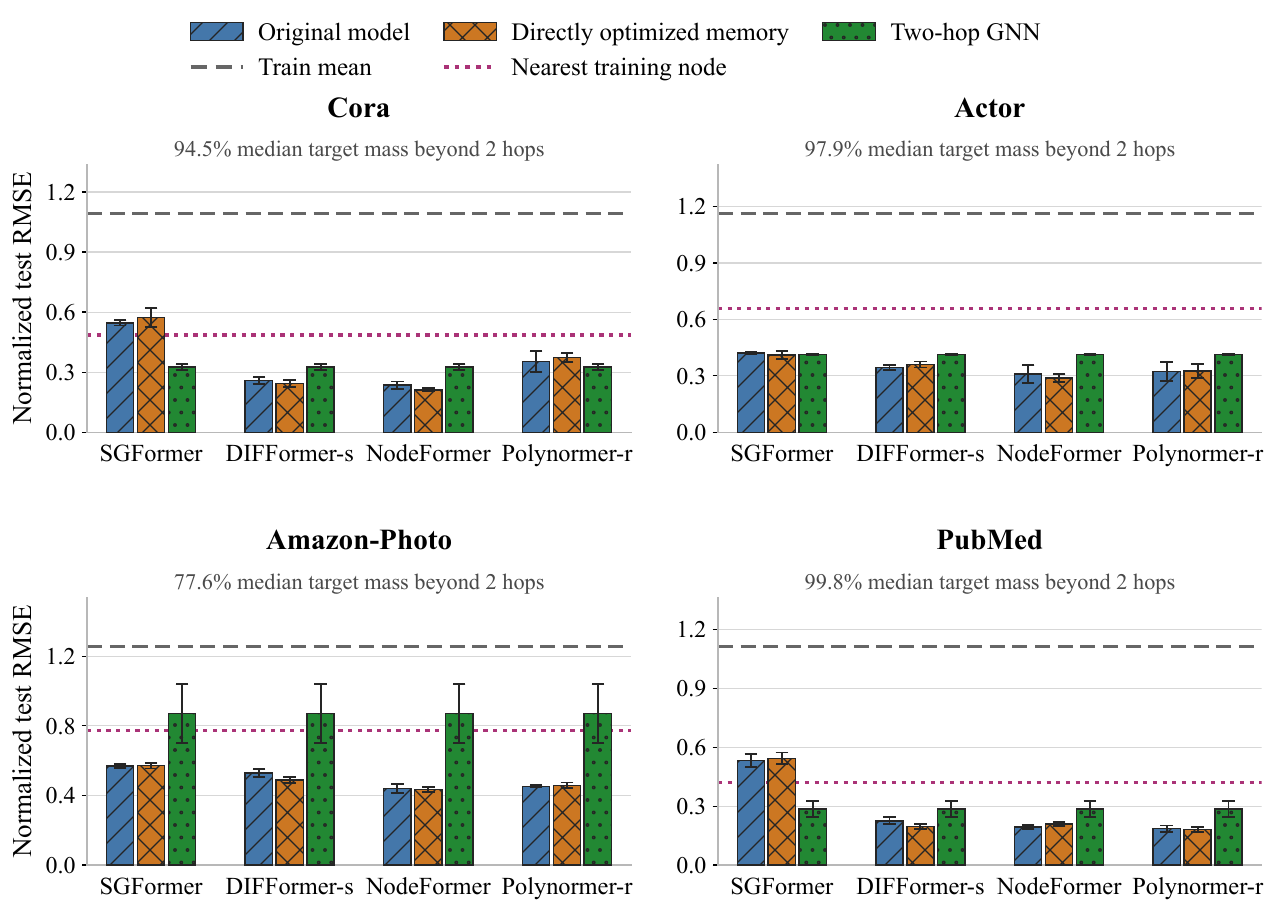}
    \caption{\textbf{Learning one fixed long-range target.}
    Lower normalized test RMSE is better. Bars compare the original model,
    directly optimized shared memory, and a two-hop GNN. The same GNN result is
    repeated within each model group. Horizontal lines show the train mean and
    nearest-training-node baselines. Whiskers are $95\%$ confidence intervals
    over five optimization seeds.}
    \label{fig:static-long-range}
\end{figure*}

\begin{figure*}[t]
    \centering
    \includegraphics[width=\textwidth]{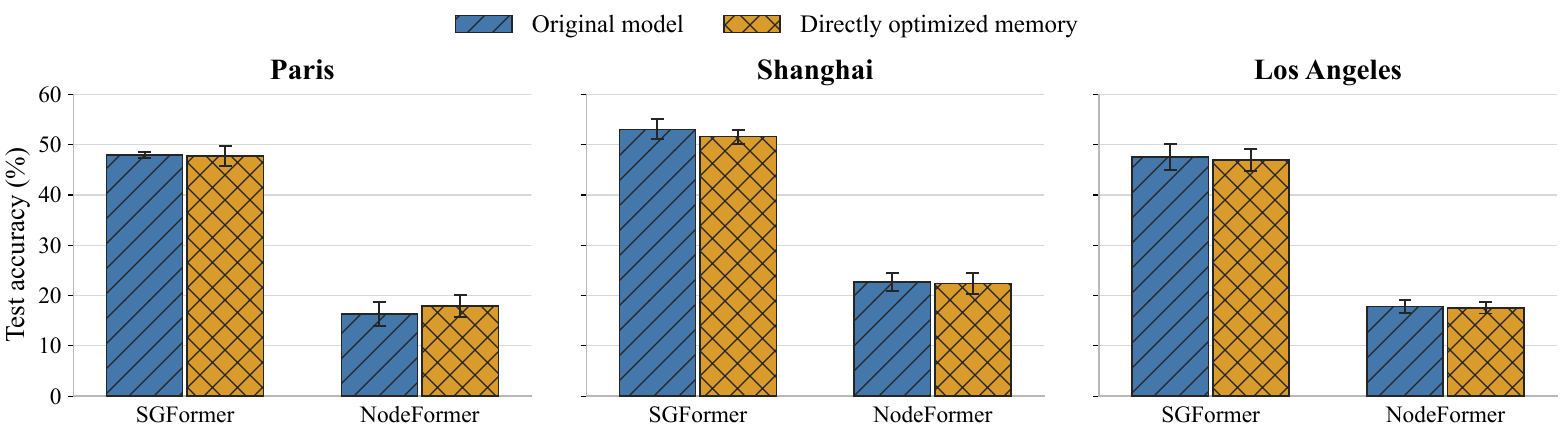}
    \caption{\textbf{Directly optimizing shared memory on City-Networks.}
    Bars show mean test accuracy for the original model and its directly
    optimized shared-memory counterpart. Whiskers are marginal $95\%$
    Student-$t$ confidence intervals.}
    \label{fig:city-memory}
\end{figure*}

\section{Conclusion \& Future Work}

Many variants of efficient Graph Transformers, similar to virtual nodes, can be seen as a shared memory mechanism.
While developed in settings where the shared memory learns a mechanism to extract useful information for the task from each different input, its role in the standard transductive setup, with one fixed graph as the input, is less clear.

Across several models and datasets, directly optimizing the shared memory performs similarly to the original writer. We also show that, on one fixed input, the covered compact-memory models have no larger prediction set than sufficiently wide graph-specific local GNNs. This does not mean that the full model families are equivalent or that the writer is unimportant during training. In our perturbation experiments, however, recomputing the memory gives no consistent advantage over replay. These results suggest that static transductive accuracy alone cannot establish that input-conditioned global communication was necessary.

\clearpage
\bibliographystyle{unsrtnat}
\bibliography{refs}

\clearpage
\appendix
\raggedbottom
\crefalias{section}{appendix}
\crefalias{subsection}{appendix}

\setlength{\textfloatsep}{8pt plus 2pt minus 2pt}
\setlength{\floatsep}{8pt plus 2pt minus 2pt}
\setlength{\intextsep}{8pt plus 2pt minus 2pt}
\captionsetup{skip=4pt}

\section{Related Work}\label{sec:related}

\paragraph{Shared memory and scalable global communication.}
Linear attention is commonly written as recurrent or fast-weight memory: keys and values update shared moments that later queries read \citep{katharopoulos2020linear,schlag2021linear,choromanski2021performer}. This algebraic interpretation is established; our contribution is to ask what it lets one identify when evaluation contains only one fixed graph input. Graph Transformers use global or structurally biased attention, often alongside local message passing, and scalable designs use kernelized attention, diffusion, polynomial aggregation, sparse routing, or bounded auxiliary tokens \citep{ying2021graphormer,kreuzer2021san,rampasek2022gps,muller2024attending,wu2022nodeformer,wu2023difformer,wu2023sgformer,deng2024polynormer,shirzad2023exphormer,shirzad2024even}.

\paragraph{Attention and baseline reassessments.}
Prior work has replaced input-dependent attention by averages or simpler propagation modules \citep{hassid2022attend,zhou2024rethinking}. Carefully tuned GNN baselines match or exceed recent Graph Transformer results on many transductive node-classification datasets \citep{luo2024classic}; separately, tuning and evaluation corrections substantially reduce the reported GNN-GraphGPS gap on the Long-Range Graph Benchmark \citep{tonshoff2023where}. Such outcome-level parity motivates the mechanism question, but does not establish functional, optimization, or hypothesis-class equivalence.

\paragraph{Tests of long-range dependence.}
The Long-Range Graph Benchmark introduced tasks intended to expose the use of distant information \citep{dwivedi2022long}. Later work measures interaction range through graph distance and model influence \citep{bamberger2025measuring}, combines a long-range transductive task with Jacobian-based influence measurements \citep{liang2026citynetworks}, constructs targets with provable long-range dependence \citep{mathys2026lrim}, or tests inductive generalization across generated graph families and controlled shifts \citep{vanlangendonck2026graphuniverse}. These studies ask whether distant information matters, whether a model responds to it, or whether behavior generalizes across inputs. Our work talks about a specific transductive setup and whether the shared memory captures long-range dependencies and whether it is necessary or advantageous over local message-passing models in this particular setup.

\paragraph{Expressivity and compressibility.}
Expressivity results compare architectural classes \citep{cai2023connection,loukas2020what}; we instead ask what fixed-input evaluation identifies about one trained computation.  \citet{shirzad2024compressibility} derive conditions for reducing a Graph Transformer's hidden width while approximately preserving its outputs, with corresponding attention-score guarantees under their assumptions.

\section{Theoretical Support}\label{sec:theory}

Exact replay shows that the writer is unnecessary after training on one fixed input. We next ask two harder questions. Can the effect of the shared memory be absorbed into an ordinary GNN? Can gradient descent learn such a fixed state directly? We give an exact answer to the first question and initial results for the second.

\subsection{Hypothesis Class}\label{theory:hypothesis-class}
\paragraph{Compiling the memory into local weights.}
Consider a layer of the form
\begin{equation}
    h_v^{\ell+1}
    =
    R_\ell\left(a_v^\ell,c^\ell\right),
    \label{eq:memory-reader}
\end{equation}
where \(a_v^\ell\) contains the node state and all retained local messages. The state \(c^\ell\) is shared by every node. We assume that every nonlocal pathway covered by the claim passes through \(c^\ell\).

On a fixed input \(I^*\), the writer produces one fixed state \(c^{\ell,*}\). The layer therefore becomes the nodewise function
\begin{equation}
    g_\ell(a)
    =
    R_\ell\left(a,c^{\ell,*}\right).
    \label{eq:compiled-reader}
\end{equation}

\begin{proposition}[Fixed-input compilation]
For any trained compact-memory model on \(I^*\), there exists a sufficiently wide writer-free message-passing GNN that uses the same local edges and produces exactly the same predictions on \(I^*\).
\label{prop:fixed-compilation}
\end{proposition}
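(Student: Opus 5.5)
The plan is a layer-by-layer induction on depth, combining exact replay with finite interpolation. Fix the trained compact-memory model with parameters $\theta$ on $I^*$ and record the memory states $c^{1,*},\dots,c^{L,*}$ produced by the writers on $I^*$. By the exact-replay observation in \cref{sec:collapse}, replacing each writer by its recorded state leaves every layer output unchanged. Each layer then becomes the compiled reader $g_\ell(a)=R_\ell(a,c^{\ell,*})$ of \cref{eq:compiled-reader}. This $g_\ell$ is a fixed nodewise map applied to $a_v^\ell$. By the standing assumption, $a_v^\ell$ depends only on the node's own state and its retained local messages along the original edges, because every nonlocal pathway passes through $c^\ell$.

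Next I would build the writer-free GNN so that its layer-$\ell$ update is local aggregation followed by a nodewise update $\phi_\ell$ equal to $g_\ell$ on the relevant inputs. The induction hypothesis is that the GNN's hidden states equal the original model's $h_v^\ell$ for every $v\in V$. The GNN uses the same edges and the same local message functions, which are part of the reader's parameters and do not involve $c$. It therefore forms exactly the same $a_v^\ell$, and applying $\phi_\ell=g_\ell$ reproduces $h_v^{\ell+1}$. The final readout is nodewise and carries over unchanged, so the predictions coincide.

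Two routes realize $\phi_\ell$. In the \emph{exact-architecture} route, one notes that for kernelized attention, \cref{eq:kernel-read} with fixed $(S,\zeta)$ is $f_q$ followed by the fixed linear map $S^\top$ and division by the fixed linear functional $\zeta^\top$. The reader can thus be hard-coded as a nodewise module with constants stored as weights, and its parameter count is $\Theta(d_kd_v)$. In the \emph{generic} route, which covers arbitrary $R_\ell$, one observes that on the single input $I^*$ only the finitely many distinct contexts $\{a_v^\ell : v\in V\}$ occur. A sufficiently wide MLP (GIN-style, width at least $|V|$) can interpolate any target values exactly at finitely many distinct points. This follows from the standard finite-sample memorization results for ReLU networks: choose a projection direction that separates the points, then fit a piecewise-linear function through them. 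If two nodes share the same $a_v^\ell$, then $g_\ell$ assigns them the same value, so the interpolation problem is consistent.

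The main obstacle is making ``sufficiently wide message-passing GNN'' mean an architecture whose aggregation can carry $a_v^\ell$ faithfully. If the original reader uses retained messages in a structured way (a residual term plus a GCN-normalized sum, for instance), the GNN's aggregator must expose the same quantity. I would handle this by letting the GNN reuse the original local module verbatim and placing all the freedom in the nodewise update. If that is deemed outside a ``standard'' GNN, I would fall back to an injective sum-aggregation argument in the style of \citet{xu2018powerful}, which works because the node feature sets are finite on $I^*$.
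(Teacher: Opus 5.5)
Your generic route is the paper's argument: each layer has finitely many contexts on $I^*$, equal contexts receive equal targets $R_\ell(a,c^{\ell,*})$, a wide ReLU MLP interpolates them exactly, and induction over layers preserves every node state. Your extra exact-architecture route (hard-coding $S^\top$ and $\zeta^\top$ as nodewise weights) and your explicit handling of the local aggregator are sound refinements that the paper's sketch leaves implicit.
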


\begin{proof}[Proof sketch]
At each layer, the fixed graph produces finitely many local contexts \(a_v^\ell\). For each context, the required output is
\(R_\ell(a_v^\ell,c^{\ell,*})\).
Equal contexts have equal required outputs because the reader and shared state are the same.

A sufficiently wide ReLU MLP can interpolate these finitely many input-output pairs exactly \citep{yun2019small}. We replace the memory-conditioned reader by this shared MLP. Repeating this construction layer by layer preserves every node state and final prediction.
\end{proof}

Let \(\mathcal P_{\rm mem}(I^*)\) denote the predictions attainable by compact-memory models on \(I^*\). Let \(\mathcal P_{\rm GNN}(I^*)\) denote those attainable by sufficiently wide local GNNs. Then
\begin{equation}
    \mathcal P_{\rm mem}(I^*)
    \subseteq
    \mathcal P_{\rm GNN}(I^*).
    \label{eq:prediction-inclusion}
\end{equation}
If the compact-memory family contains the corresponding local GNN when its writer is disabled, the two prediction sets are equal.

This equality holds only on \(I^*\). It does not imply that the two families represent the same functions over changing graphs or features. The compiled GNN can also require graph-specific weights and large width. This is therefore a representational result, not a learnability or efficiency result.

\subsection{Optimization Perspective}
\label{theory:optimization}

\paragraph{Why general optimization equivalence is impossible.}
Exact replay does not imply that removing the writer preserves training.
Consider a state $c=(c_1,c_2)$. A labeled node reads $c_1$, while a held-out
node reads $c_2$. Both desired outputs are $1$.

Suppose the writer enforces
\[
    c=(\psi,\psi).
\]
Training on the labeled node moves $\psi$ toward $1$. Both entries then
approach $1$, so the held-out prediction is also correct.

Now optimize $c_1$ and $c_2$ independently from zero. The labeled loss updates
$c_1$, but gives no gradient to $c_2$. Training approaches $c=(1,0)$. It fits
the labeled node but fails on the held-out node.

The writer can therefore couple state directions that would otherwise be
learned independently. Its parameterization can change which solution is
reached, as is common in underdetermined models
\citep{gunasekar2017implicit,gunasekar2018geometry}. No general theorem can
claim that a writer and a free state always learn the same model.

\paragraph{Learning an affine shared state.}
Suppose predictions are affine in the state:
\begin{equation}
    p_L(c)=A_Lc+b_L,
    \qquad
    p_U(c)=A_Uc+b_U,
    \label{eq:affine-state-readers}
\end{equation}
where $L$ and $U$ are the labeled and held-out nodes. Let
\[
    \mathcal L_L(c)
    =
    \frac12\lVert A_Lc+b_L-y_L\rVert^2,
\]
and let $c^*$ be a replay state that minimizes this loss.

\begin{proposition}[Gradient descent on a fixed state]
Optimize $c$ by gradient descent on $\mathcal L_L$. If
\[
    0<\eta<\frac{2}{\lVert A_L\rVert_2^2},
\]
the iterates converge to a global minimizer. Their limit is
\begin{equation}
    c_\infty
    =
    c^*
    +
    P_{\ker(A_L)}(c_0-c^*),
    \label{eq:affine-state-limit}
\end{equation}
where $P_{\ker(A_L)}$ projects onto the state directions that the labeled
nodes cannot observe.

The replay state is recovered from every initialization when $A_L$ has full
column rank. More generally, its held-out predictions are recovered from every
initialization exactly when
\begin{equation}
    \ker(A_L)
    \subseteq
    \ker(A_U).
    \label{eq:state-observability}
\end{equation}
\label{prop:affine-state}
\end{proposition}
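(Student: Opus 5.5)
The plan is to reduce everything to standard linear least squares by splitting the state space orthogonally as \(\R^M=\ker(A_L)\oplus\operatorname{range}(A_L^\top)\). The gradient is \(\nabla\mathcal L_L(c)=A_L^\top(A_Lc+b_L-y_L)\), which always lies in \(\operatorname{range}(A_L^\top)=\ker(A_L)^\perp\). The gradient descent update \(c_{t+1}=c_t-\eta A_L^\top(A_Lc_t+b_L-y_L)\) therefore never changes the component of \(c_t\) in \(\ker(A_L)\). Hence \(P_{\ker(A_L)}c_t=P_{\ker(A_L)}c_0\) for all \(t\).

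Next I would write the dynamics in terms of the deviation \(e_t=c_t-c^*\). Because \(c^*\) minimizes \(\mathcal L_L\), it satisfies the normal equations \(A_L^\top(A_Lc^*+b_L-y_L)=0\). Subtracting gives
\[
e_{t+1}=(I-\eta A_L^\top A_L)e_t .
\]
Decompose \(e_t\) into its component in \(\ker(A_L)\), which is constant and equal to \(P_{\ker(A_L)}(c_0-c^*)\), and its component \(r_t\) in \(\operatorname{range}(A_L^\top)\).

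On \(\operatorname{range}(A_L^\top)\), the matrix \(A_L^\top A_L\) is positive definite, with eigenvalues in \([\sigma_{\min}^2,\lVert A_L\rVert_2^2]\), where \(\sigma_{\min}\) is the smallest nonzero singular value. So \(I-\eta A_L^\top A_L\) acts there with eigenvalues \(1-\eta\sigma^2\). The step-size condition \(0<\eta<2/\lVert A_L\rVert_2^2\) gives \(|1-\eta\sigma^2|<1\) for every nonzero singular value \(\sigma\). Thus \(r_t\to0\) geometrically, and \(c_t\to c^*+P_{\ker(A_L)}(c_0-c^*)\), which is \eqref{eq:affine-state-limit}. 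The limit is a global minimizer because it differs from \(c^*\) only by a vector in \(\ker(A_L)\), so it has the same labeled predictions and the same loss. This spectral step is the only one needing care. In particular, one must note that the kernel directions have eigenvalue exactly \(1\), so they do not decay but also do not enter \(r_t\).

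For the recovery statements, first suppose \(A_L\) has full column rank. Then \(\ker(A_L)=\{0\}\), the projection term vanishes, and \(c_\infty=c^*\) for every \(c_0\).

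In general, the held-out predictions of the limit are \(p_U(c_\infty)=p_U(c^*)+A_UP_{\ker(A_L)}(c_0-c^*)\). As \(c_0\) ranges over \(\R^M\), the vector \(P_{\ker(A_L)}(c_0-c^*)\) ranges over all of \(\ker(A_L)\). So \(p_U(c_\infty)=p_U(c^*)\) holds for every initialization if and only if \(A_U\) vanishes on \(\ker(A_L)\), which is exactly condition \eqref{eq:state-observability}. For the converse direction, choose \(c_0=c^*+k\) with \(k\in\ker(A_L)\setminus\ker(A_U)\); then \(p_U(c_\infty)\ne p_U(c^*)\).

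Finally, I would remark that the condition is independent of which minimizer \(c^*\) is chosen. Any two minimizers differ by an element of \(\ker(A_L)\), and under \eqref{eq:state-observability} such an element does not change \(p_U\).
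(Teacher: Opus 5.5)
Your proposal is correct and follows essentially the same route as the paper. Both use the normal equations for $c^*$ to get the error recursion $c_{t+1}-c^*=(I-\eta A_L^\top A_L)(c_t-c^*)$, note that the $\operatorname{range}(A_L^\top)$ component contracts under the step-size bound while the $\ker(A_L)$ component stays fixed, and obtain both recovery conditions by asking whether $A_U$ vanishes on $\ker(A_L)$; your extra details (the bound $|1-\eta\sigma^2|<1$, why the limit is a global minimizer, the witness $c_0=c^*+k$ for the converse, and the independence of the criterion from the choice of minimizer $c^*$) are sound and make explicit what the paper's shorter proof leaves implicit.
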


\begin{proof}
Because $c^*$ minimizes the labeled loss, it satisfies the normal equation
\[
    A_L^\top(A_Lc^*+b_L-y_L)=0.
\]
One gradient step therefore gives
\[
    c_{t+1}-c^*
    =
    (I-\eta A_L^\top A_L)(c_t-c^*).
\]
Under the stated step size, every component in
$\operatorname{range}(A_L^\top)$ converges to zero, while every component in
$\ker(A_L)$ remains unchanged. This proves
\cref{eq:affine-state-limit}. The state is recovered for every initialization
exactly when $\ker(A_L)=\{0\}$. Held-out predictions are recovered for every
initialization exactly when $A_Ud=0$ for every $d\in\ker(A_L)$, which is
\cref{eq:state-observability}.
\end{proof}

Condition~\eqref{eq:state-observability} has a simple meaning. Any state
direction invisible to the labeled nodes must also be irrelevant to the
held-out nodes. The labels do not need to identify the whole state. They only
need to identify the parts used at evaluation.

Equation~\eqref{eq:affine-state-limit} also shows the role of initialization.
Gradient descent selects the labeled-loss minimizer closest to $c_0$. In
particular, zero initialization selects the minimum-norm minimizer. This is a
simple form of implicit bias \citep{gunasekar2018geometry}.

A similar choice can be made explicitly. Consider
\begin{equation}
    \mathcal L_\lambda(c)
    =
    \mathcal L_L(c)
    +
    \frac{\lambda}{2}\lVert c-c_{\rm ref}\rVert^2,
    \qquad \lambda>0.
    \label{eq:regularized-state}
\end{equation}
This objective has a unique minimizer. As $\lambda$ approaches zero, it
selects the labeled-loss minimizer closest to $c_{\rm ref}$. Initialization
and regularization therefore control state directions that the labels do not
determine. They cannot guarantee correct held-out behavior when
\cref{eq:state-observability} fails.

For one-layer unnormalized linear attention with fixed queries,
\[
    O=QS,
\]
the shared memory $S$ is a shared linear weight matrix.
Proposition~\ref{prop:affine-state} applies directly when $S$ is optimized with
squared loss. For normalized kernel attention, it also applies to $S$ when the
normalization state is fixed. Jointly learning all states in a nonlinear
multilayer model remains outside this result.

\paragraph{When writer training equals direct-state training.}
Suppose the writer is affine in its parameters:
\begin{equation}
    c=B\psi+d,
    \label{eq:affine-writer}
\end{equation}
where $B$ and $d$ are fixed. One gradient step on $\psi$ gives
\begin{equation}
    c_{t+1}
    =
    c_t
    -
    \eta BB^\top\nabla_c\mathcal L(c_t).
    \label{eq:writer-preconditioner}
\end{equation}
Thus writer training is exactly preconditioned gradient descent on the shared
state. If $BB^\top=\gamma I$, ordinary direct-state training follows the same
trajectory after multiplying its learning rate by $\gamma$. This requires
matching initialization and no writer-specific regularization.

If $B$ is not full row rank, the writer restricts which state directions can
be reached. If $B$ is full row rank, it can still change the speed and geometry
of optimization. An affine writer is therefore a parameterization and
preconditioner for learning the fixed state.

\paragraph{An exact extension to nonlinear multilayer models.}
The same parameterization view extends to nonlinear networks of arbitrary
depth. The readers and local layers can also be trained jointly. We use a
smooth-parameterization argument \citep{levin2025effect}.

Fix the complete input $I^*$ and consider a deterministic model with $D$
layers. Write its parameters as $w=(\theta,\psi)$, where $\theta$ contains
all retained parameters and $\psi$ contains the removed writer-only
parameters. Let
\[
    C(w)=\bigl(c^1(w),\ldots,c^D(w)\bigr)
\]
stack the memories produced by the complete original forward pass.
Thus $C$ includes all dependencies through earlier layers and through
$\theta$.

Let $\mathcal L(\theta,c)$ be the prediction loss of the writer-free model,
with all layer memories supplied as independent variables. Let
$\mathcal J(w)$ be the original model's prediction loss. Both use the same
loss, without additional writer-specific penalties. Define
\begin{equation}
    \Phi(w)=(\theta,C(w)).
    \label{eq:nonlinear-writer-map}
\end{equation}
Here, $D\Phi(w)$ denotes the Jacobian of $\Phi$.

\begin{proposition}[Nonlinear multilayer writers as parameterizations]
\label{prop:nonlinear-writer}
Assume that $\Phi$ and $\mathcal L$ are continuously differentiable on open
neighborhoods of the points considered. Then
\[
    \mathcal J(w)=\mathcal L(\Phi(w)).
\]
Along any gradient-flow trajectory
$\dot w(t)=-\nabla\mathcal J(w(t))$ that stays in these neighborhoods,
$z(t)=\Phi(w(t))$ satisfies
\begin{equation}
    \dot z(t)
    =
    -D\Phi(w(t))D\Phi(w(t))^\top
    \nabla\mathcal L(z(t)).
    \label{eq:nonlinear-writer-flow}
\end{equation}

At any $w^*$ where $D\Phi(w^*)$ has full row rank, with
$z^*=\Phi(w^*)$,
\begin{equation}
    \nabla\mathcal J(w^*)=0
    \quad\Longleftrightarrow\quad
    \nabla\mathcal L(z^*)=0.
    \label{eq:nonlinear-writer-stationarity}
\end{equation}
Under the same rank condition, $w^*$ is a local minimum of $\mathcal J$
if and only if $z^*$ is a local minimum of $\mathcal L$.
\end{proposition}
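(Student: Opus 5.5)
The plan has three parts: first the identity $\mathcal J=\mathcal L\circ\Phi$ from the definitions, then the flow equation from the chain rule, then the two equivalences from the rank condition.

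\emph{Step 1: the identity.} The writer-free model with memories supplied as independent variables runs exactly the same computation as the original forward pass once we plug in $c=C(w)$. This is the exact-replay argument applied layer by layer. By induction on the layer index, if every memory $c^\ell$ equals $c^\ell(w)$, then every node state of the writer-free model equals the corresponding state of the original model. The retained parameters $\theta$ are shared by both models, and the readers and local layers use only $\theta$, local inputs and $c^\ell$. Hence the predictions coincide, and since the loss is the same with no writer penalties, $\mathcal J(w)=\mathcal L(\theta,C(w))=\mathcal L(\Phi(w))$.

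\emph{Step 2: the flow.} By the chain rule, $\nabla\mathcal J(w)=D\Phi(w)^\top\nabla\mathcal L(\Phi(w))$. Differentiating $z(t)=\Phi(w(t))$ gives
\[
\dot z=D\Phi(w)\dot w=-D\Phi(w)D\Phi(w)^\top\nabla\mathcal L(z),
\]
which is \cref{eq:nonlinear-writer-flow}. The $C^1$ assumptions on the neighborhoods justify both derivatives along the trajectory.

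\emph{Step 3: stationarity.} If $D\Phi(w^*)$ has full row rank, then $D\Phi(w^*)^\top$ is injective. So $D\Phi(w^*)^\top\nabla\mathcal L(z^*)=0$ holds iff $\nabla\mathcal L(z^*)=0$, which gives \cref{eq:nonlinear-writer-stationarity}.

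\emph{Step 4: local minima.} This is the main obstacle. One direction is easy: if $z^*$ is a local minimum of $\mathcal L$, then by continuity of $\Phi$ there is a neighborhood of $w^*$ mapped into the neighborhood where $\mathcal L\ge\mathcal L(z^*)$. Hence $\mathcal J(w)\ge\mathcal J(w^*)$ there, with no rank condition needed.

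For the converse we need $\Phi$ to be locally open at $w^*$. I would use the submersion theorem, i.e.\ the constant-rank or implicit-function theorem. Full row rank of a $C^1$ map means $\Phi$ is a submersion near $w^*$. So there is a $C^1$ local right inverse $\sigma$, defined on a neighborhood $N$ of $z^*$, with $\Phi\circ\sigma=\mathrm{id}$ and $\sigma(z^*)=w^*$. To build it, choose coordinates completing $D\Phi(w^*)$ to an invertible map and apply the inverse function theorem.

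Given $\sigma$, suppose $w^*$ is a local minimum of $\mathcal J$. For $z$ near $z^*$, $\sigma(z)$ lies near $w^*$, so
\[
\mathcal L(z)=\mathcal L(\Phi(\sigma(z)))=\mathcal J(\sigma(z))\ge\mathcal J(w^*)=\mathcal L(z^*).
\]
Thus $z^*$ is a local minimum of $\mathcal L$.

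The care needed is in constructing $\sigma$ under only $C^1$ regularity, and in shrinking neighborhoods so that $\sigma$ maps into the region where $w^*$ is minimal. I would cite the standard submersion local form, following \citet{levin2025effect}, rather than reprove it.
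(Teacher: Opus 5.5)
Your proposal is correct and matches the paper's proof step for step. It uses the layerwise induction for $\mathcal J=\mathcal L\circ\Phi$, the chain rule for the flow, and injectivity of $D\Phi(w^*)^\top$ for stationarity, then continuity and a local right inverse from the inverse function theorem for the two local-minimum directions; completing $D\Phi(w^*)$ to an invertible map is the same device as the paper's fixing of the coordinates outside an invertible square minor.
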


\begin{proof}
For each parameter setting $w$, supplying $C(w)$ to the writer-free model
reproduces the original computation. This follows by induction over layers.
At each layer, the preceding node states, retained parameters, and supplied
memory agree. The outputs therefore agree as well. Hence
$\mathcal J=\mathcal L\circ\Phi$.

The chain rule gives
\[
    \nabla\mathcal J(w)
    =
    D\Phi(w)^\top\nabla\mathcal L(\Phi(w)).
\]
Along gradient flow,
\[
    \dot z
    =
    D\Phi(w)\dot w
    =
    -D\Phi(w)D\Phi(w)^\top\nabla\mathcal L(z),
\]
which proves \cref{eq:nonlinear-writer-flow}.

If $D\Phi(w^*)$ has full row rank, its transpose has zero kernel. The
gradient identity therefore proves
\cref{eq:nonlinear-writer-stationarity}.

If $z^*$ is a local minimum of $\mathcal L$, continuity of $\Phi$ makes
$w^*$ a local minimum of $\mathcal J$. Conversely, full row rank and the
inverse function theorem give a continuous local map $s$ satisfying
\[
    s(z^*)=w^*,
    \qquad
    \Phi(s(z))=z.
\]
This map can be obtained by fixing the parameter coordinates outside an
invertible square minor of $D\Phi(w^*)$. If $w^*$ is a local minimum,
then for every sufficiently nearby $z$,
\[
    \mathcal L(z)
    =
    \mathcal J(s(z))
    \ge
    \mathcal J(w^*)
    =
    \mathcal L(z^*).
\]
Thus $z^*$ is a local minimum of $\mathcal L$.
\end{proof}

The rank condition has a direct interpretation. Since
\[
    D\Phi(w)
    =
    \begin{pmatrix}
        I & 0\\
        D_\theta C(w) & D_\psi C(w)
    \end{pmatrix},
\]
it has full row rank exactly when $D_\psi C(w)$ has full row rank.
The writer parameters must therefore locally control every memory
direction while the retained parameters are held fixed. We have not
established this condition for the experimental architectures.

Equation~\eqref{eq:nonlinear-writer-flow} holds without the rank condition
and does not linearize the network. The writer induces a
positive-semidefinite preconditioner on the retained parameters and all
layer memories. This preconditioner depends on $w(t)$ and need not be
determined by $z(t)$ alone. The result does not imply identical finite-step
gradient-descent updates or convergence to the same solution from
independent initializations.

\paragraph{A local extension beyond affine readers.}
For a nonlinear reader, its behavior near a state $c_0$ is described by its
Jacobian. In a small neighborhood, $A_L$ and $A_U$ in
Proposition~\ref{prop:affine-state} can be replaced by the labeled and held-out
Jacobians. The condition
\[
    \ker(J_L)\subseteq\ker(J_U)
\]
then gives a local observability test. Every locally invisible direction for
the labeled nodes must also be locally irrelevant to the held-out nodes.

If the complete state objective is smooth and satisfies a
Polyak--Lojasiewicz condition, gradient descent converges globally at a linear
rate \citep{karimi2016linear}. This condition can hold without convexity.
However, it has not been established for the nonlinear Graph Transformers
considered here. It is a possible route for future analysis, not a claim about
their current training dynamics.

\subsection{Global Coupling Through Shared GNN Weights}
A local GNN has another graph-wide channel during training. Let $f_v(\theta)$
be a scalar prediction at node $v$. After one gradient step on the labeled
nodes,
\begin{equation}
\begin{split}
    f_v(\theta^+)-f_v(\theta)
    =
    -\eta\sum_{u\in L}
    \left\langle
        \nabla_\theta f_v,
        \nabla_\theta f_u
    \right\rangle
    \ell'_u
    +O(\eta^2).
\end{split}
\label{eq:cross-node-training}
\end{equation}
The inner product measures how an update caused by labeled node $u$ changes
node $v$. It can be nonzero even when the nodes are outside each other's
forward receptive fields because they use the same parameters. This is the
finite-width, first-order tangent-kernel view of neural-network training
\citep{jacot2018neural}; related graph neural tangent kernels have been
studied for GNNs \citep{du2019graph}.

The first-order expression becomes exact for the linearized model
\[
    f(\theta)
    =
    f(\theta_0)+J(\theta-\theta_0),
\]
whose Jacobian $J$ and kernel $K=JJ^\top$ are fixed. The same dynamics describe
the corresponding infinite-width limit when its neural tangent kernel remains
constant. The next result characterizes what is learned in this regime.

\begin{proposition}[Learning across nodes with a fixed tangent kernel]
\label{prop:fixed-kernel-gnn}
Let $f(t)\in\mathbb R^n$ contain one prediction per node of a linearized GNN on
a fixed graph. Let $J$ be its fixed Jacobian and $K=JJ^\top$. Train on labeled
nodes $L$ with the squared loss
\[
    \mathcal L(f_L)=\frac12\lVert f_L-y_L\rVert^2.
\]
Assume that $K_{LL}$ is positive definite \citep[see conditions in][]{carvalho2024positivityneuraltangentkernel}. Under gradient flow,
\begin{equation}
    f_L(t)-y_L
    =
    \exp(-K_{LL}t)\bigl(f_L(0)-y_L\bigr),
    \label{eq:fixed-kernel-labeled}
\end{equation}
and
\begin{equation}
\begin{split}
    f_U(t)
    =f_U(0)
    +K_{UL}K_{LL}^{-1}
    \left(I-\exp(-K_{LL}t)\right)
    \left(y_L-f_L(0)\right).
\end{split}
\label{eq:fixed-kernel-path}
\end{equation}
Consequently,
\begin{equation}
    f_U(\infty)
    =
    f_U(0)
    +K_{UL}K_{LL}^{-1}\left(y_L-f_L(0)\right).
    \label{eq:fixed-kernel-limit}
\end{equation}
The same limit is reached by gradient descent on the linearized model when
\[
    0<\eta<\frac{2}{\lambda_{\max}(K_{LL})}.
\]
For a desired held-out target $y_U$, exact recovery occurs if and only if
\begin{equation}
    y_U-f_U(0)
    =
    K_{UL}K_{LL}^{-1}\left(y_L-f_L(0)\right).
    \label{eq:fixed-kernel-recovery}
\end{equation}
\end{proposition}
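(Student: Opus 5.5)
The plan is to push the parameter dynamics forward into function space, solve the labeled block in closed form, and then integrate the held-out block against it.

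\textbf{Function-space flow.} For the linearized model $f(\theta)=f(\theta_0)+J(\theta-\theta_0)$, the loss depends only on $f_L=J_L\theta+\text{const}$, where $J_L$ denotes the rows of $J$ indexed by $L$. Its gradient in parameter space is $\nabla_\theta\mathcal L=J_L^\top(f_L-y_L)$. Gradient flow $\dot\theta=-J_L^\top(f_L-y_L)$ therefore gives, after multiplying by $J$,
\[
\dot f(t)=-J J_L^\top\bigl(f_L(t)-y_L\bigr)=-K_{\cdot L}\bigl(f_L(t)-y_L\bigr).
\]
Splitting this into blocks yields $\dot f_L=-K_{LL}(f_L-y_L)$ and $\dot f_U=-K_{UL}(f_L-y_L)$. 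The labeled block is a linear ODE with constant coefficients in $r(t)=f_L(t)-y_L$. Its solution $r(t)=\exp(-K_{LL}t)r(0)$ is exactly \cref{eq:fixed-kernel-labeled}.

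\textbf{Held-out block.} I would substitute $r(t)$ into the held-out equation and integrate:
\[
f_U(t)-f_U(0)=-K_{UL}\int_0^t \exp(-K_{LL}s)\,ds\;r(0).
\]
Because $K_{LL}$ is positive definite, it is invertible. The integral equals $K_{LL}^{-1}\bigl(I-\exp(-K_{LL}t)\bigr)$; one checks this by differentiating, using that $K_{LL}$ commutes with its own exponential. Since $-r(0)=y_L-f_L(0)$, this is \cref{eq:fixed-kernel-path}.

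\textbf{Limit and recovery.} Positive definiteness means every eigenvalue of $K_{LL}$ is positive. Hence $\exp(-K_{LL}t)\to0$, and the limit \cref{eq:fixed-kernel-limit} follows. The recovery criterion \cref{eq:fixed-kernel-recovery} is then just the statement $f_U(\infty)=y_U$, rearranged.

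\textbf{Discrete gradient descent.} The same computation gives the recursions $r_{k+1}=(I-\eta K_{LL})r_k$ and $f_{U,k+1}=f_{U,k}-\eta K_{UL}r_k$. The step-size condition places the spectrum of $I-\eta K_{LL}$ in $(-1,1)$, so $r_k\to0$ geometrically. Summing the geometric series gives
\[
\sum_{k\ge0}\eta(I-\eta K_{LL})^k=\eta(\eta K_{LL})^{-1}=K_{LL}^{-1},
\]
so the held-out iterates converge to the same limit as the flow.

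None of these steps is a real obstacle. The only point needing care is recognizing that the dynamics close on $f_L$ alone, which is what makes the problem exactly solvable. I would also diagonalize $K_{LL}=Q\Lambda Q^\top$ to justify the matrix integral and the geometric series cleanly, coordinate by coordinate.
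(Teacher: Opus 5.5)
Your proposal is correct and follows the paper's proof essentially step for step. Both use the closed function-space system $\dot f_L=-K_{LL}(f_L-y_L)$, $\dot f_U=-K_{UL}(f_L-y_L)$, the exponential decay of the labeled residual integrated into the held-out block, positive definiteness to kill $\exp(-K_{LL}t)$, and the geometric series for gradient descent. The only difference is that you derive the function-space flow from $\dot\theta=-J_L^\top(f_L-y_L)$ and evaluate the matrix integral explicitly, both of which the paper leaves implicit.
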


\begin{proof}
With a constant tangent kernel, gradient flow on the labeled squared loss
satisfies
\[
    \dot f_L(t)=-K_{LL}(f_L(t)-y_L),
    \qquad
    \dot f_U(t)=-K_{UL}(f_L(t)-y_L).
\]
The first linear differential equation gives
\cref{eq:fixed-kernel-labeled}. Substituting this solution into the second
equation and integrating gives \cref{eq:fixed-kernel-path}. Positive
definiteness of $K_{LL}$ makes the exponential term vanish as $t$ approaches
infinity, which proves \cref{eq:fixed-kernel-limit}.

For gradient descent, the labeled residual obeys
\[
    f_L^{(t+1)}-y_L
    =
    (I-\eta K_{LL})(f_L^{(t)}-y_L).
\]
The stated step size makes every eigenvalue of $I-\eta K_{LL}$ have magnitude
less than one. Summing the corresponding geometric series in the held-out
updates gives the same limit. Finally, comparing that limit with $y_U$ proves
\cref{eq:fixed-kernel-recovery}.
\end{proof}

This proposition gives an exact class of graph-wide signals that a local GNN
can learn through shared weights. If the complete target satisfies
\[
    y-f(0)=K_{:,L}\alpha
\]
for some vector $\alpha$, then
$y_L-f_L(0)=K_{LL}\alpha$ and
\cref{eq:fixed-kernel-limit} gives $f_U(\infty)=y_U$. The distance between a
labeled node and a held-out node does not appear in this condition. The signal
can therefore vary across the whole graph even when each forward pass is
local.

For the present paper, $y$ may be the predictions produced by a fixed compact
memory or the target of a static transductive task. Proposition~\ref{prop:fixed-kernel-gnn}
shows that a local GNN learns this graph-wide signal exactly when its residual
lies in the kernel interpolation space determined by the labeled nodes. It
does not claim that every compact-memory target satisfies this condition.

Equation~\eqref{eq:cross-node-training} alone only shows that distant nodes can
be coupled. Proposition~\ref{prop:fixed-kernel-gnn} goes further: in the
fixed-kernel regime, it gives the complete training trajectory, the held-out
limit, and an exact recovery condition. For ordinary finite-width nonlinear
training, the kernel changes and this proposition need not apply.

Overall, compact-memory effects can be represented exactly by a sufficiently
expressive local GNN on one fixed input. Direct optimization recovers them in
the affine case when labeled nodes observe every evaluation-relevant
direction. In a fixed-kernel regime, shared GNN weights recover exactly the
graph-wide signals in the corresponding labeled-node kernel interpolation
space. Outside these regimes, writers can change optimization, coupling, and
implicit bias, so training equivalence is not guaranteed.

\section{Architectural Scope}\label{app:architectural-scope}

Our claim concerns components, not complete architectures. A nonlocal component is covered when all graph-wide information passes through a bounded state shared by every node. The size of this state must not grow with the number of nodes or edges. Preprocessing, model parameters, and evaluation-time randomness must also be fixed. On one fixed input, the resulting state can then be recorded and replayed exactly.

\paragraph{Directly covered models.}
Kernelized and linear attention provide the clearest examples. Their global computation factors through a fixed number of key-value moments and normalization terms \citep{katharopoulos2020linear,schlag2021linear,choromanski2021performer}. This includes the relevant components of SGFormer, the deterministic kernel path of NodeFormer, DIFFormer-s, Polynormer-r, and GraphGPS with Performer attention \citep{wu2023sgformer,wu2022nodeformer,wu2023difformer,deng2024polynormer,rampasek2022gps}. Replay must store every input-dependent moment and normalizer used by the reader.

\paragraph{Explicit shared tokens.}
A fixed number of virtual or global nodes also forms a bounded shared memory. Nodes write to these tokens and later read from them. The virtual-node component of an architecture is therefore covered when the number and width of the tokens do not grow with the graph \citep{gilmer2017neural,rampasek2022gps,shirzad2023exphormer}. If the model also has local, sparse, or pairwise routes, those routes remain unchanged. The claim applies only to the shared-token component.

\paragraph{Models covered under conditions.}
Random-feature models are covered only after their evaluation-time randomness is fixed. For NodeFormer, our audit therefore concerns its deterministic kernel path, not every stochastic or relational component. Anchor, codebook, cluster, and latent-token methods fit the same formulation only when the complete shared bank has graph-independent size. If the number of stored items grows with the graph, replay is no longer a compact shared state. A low-rank score matrix alone is also insufficient. The implemented computation must actually factor through a bounded write--read state.

\paragraph{Models outside the direct claim.}
Full attention and graph-scaling sparse attention use node-pair or edge-indexed routes. These routes are not summarized by one bounded state. This includes the sparse edge pathways of Exphormer and Spexphormer, GraphGPS with BigBird attention, and the non-factorized DIFFormer-a variant \citep{shirzad2023exphormer,shirzad2024even,rampasek2022gps,zaheer2020bigbird,wu2023difformer}. Replaying them would generally require information whose size grows with the graph. Efficiency by itself does not imply compact shared memory.

\paragraph{Mamba and graph state-space models.}
Mamba maintains a bounded recurrent state and therefore belongs to the broader write-state-read view \citep{gu2024mamba}. However, the state changes along the scan. Different positions can read different states, so replaying the full sequence may require a graph-sized state trajectory. Graph Mamba models also differ in how they tokenize, order, and scan graph nodes \citep{behrouz2024graphmamba,wang2024graphmamba}. A variant is directly covered only if its graph-wide pathway ends in a bounded state shared by all relevant node readers. Variants that retain a different scan state for each node are not directly covered. A model-by-model analysis is therefore needed.

\paragraph{Interpretation.}
For hybrid models, replay removes only the covered writer. Every other pathway stays active. Exact replay also does not say that a factorized model is equivalent to the dense attention rule it approximates. This scope concerns what a fixed-input score can identify about a mechanism. It does not judge the predictive quality, expressivity, or scalability of the cited models.

\section{Experiments}
\label{sec:experiments}

We use the following experiments to separate different questions. Exact replay checks that the implementations have the shared-memory form described in \cref{sec:setup}. The static experiments ask whether the shared memory can instead be learned as graph-specific parameters. The perturbation experiment asks whether a writer trained on fixed inputs responds usefully when the input changes.

\subsection{Exact Replay}
\label{sec:exp-replay}

We first audit four models, Nodeformer, Difformer-s, SGFormer, and Polynormer-r, on Cora, Amazon-Photo, PubMed, and Actor datasets \citep{sen2008collective,platonov2023critical,lim2021large}. We train each model on each of the datasets once. We then run its validation-selected checkpoint twice on the same input. The first run records the graph-wide state at every layer. The second run removes the corresponding writes and uses the stored shared memory with the learned read mechanism. All other parameters remain unchanged.

We construct the recorded state directly from each model's equations:

\begin{itemize}
    \setlength{\itemsep}{2pt}
    \item \textbf{Polynormer-r.} Each global layer records the key-value moment \(K^\top V\) and the key sum \(K^\top\mathbf 1\). Polynomial gates, nodewise queries, local layers, and the readout remain active.

    \item \textbf{SGFormer.} The source implementation globally normalizes its queries and keys. We therefore record the query norm, the normalized key-value moment, and the normalized key sum. The GCN branch, current nodewise queries and values, and the self term in the SGFormer reader remain active.

    \item \textbf{NodeFormer.} For each layer and head, we record
    \[
        S=\phi(K)^\top V,
        \qquad
        \zeta=\phi(K)^\top\mathbf 1.
    \]
    The random-feature bases are fixed for the run and remain part of the reader. Gumbel sampling, relational bias, and the edge loss are disabled in this audit so that the tested path is deterministic.

    \item \textbf{DIFFormer-s.} For the simple-kernel variant, we record the query norm, normalized key-value moment, value sum, and key sum. The local graph branch, residual paths, and nodewise query computation remain active. This intervention does not apply to the non-factorized DIFFormer-a attention.
\end{itemize}

This gives \(4\times4=16\) independently trained model-graph combinations. For every pair, the maximum absolute difference between the original and replayed logits was zero up to numerical stability treshold, confirming what mathematics shows from the formulation.

\subsection{Learning the Shared Memory Directly}
\label{sec:exp-learned-memory}

Exact replay is a post-training equality. It does not show that the same state can be learned directly. We test this by comparing two configurations on Cora, Amazon-Photo, PubMed, and Actor:

\begin{enumerate}
    \setlength{\itemsep}{2pt}
    \item the \emph{original model}, the original model as implemented in the code base provided by the authors;
    \item a \emph{learned-memory model}, which replaces the writer by independently initialized trainable states of the same reader-compatible shapes.
\end{enumerate}

We use SGFormer, NodeFormer, DIFFormer-s, and Polynormer-r. The learned-memory model retains every node-side reader and every local, residual, or sparse graph path. Only the key-side writer map is removed from optimization. Its output is replaced by one trainable state per layer. These states are initialized from an independent random distribution.

The original and learned-memory models use identical initial values for every common tensor and identical random-number-generator states. The learned states use zero weight decay and a separately selected learning rate. For SGFormer and DIFFormer-s, the active parameter counts differ by at most \(0.4\%\). For NodeFormer, the number of trainable values in the learned memory depends on the random-feature dimension $m$, while the removed key projection depends on the per-head width $d_v$. We set $m=d_v$, making the replacement memory and the removed key projection exactly equal in parameter count. We also ran the source setting $m=30$ and obtained the same qualitative result. The parameter-matched setting is shown in \cref{fig:learned-memory-bars}.

We hyperparameter-tune to choose state learning rates and state initialization scales using the validation set and on a separate set of random seeds, then evaluate the finalized hyperparameter models on the original five random seeds.

The learned-memory model is within one percentage point of the original model in fourteen of sixteen comparisons. Its mean is higher in eleven comparisons and lower in five. The only two differences larger than one point favor opposite configurations. There is therefore no consistent accuracy advantage from recomputing the shared memory in these runs.

These results do not prove that the training procedures or hypothesis classes are equivalent. They show that directly learned graph-specific states can be competitive across several standard transductive settings.

\subsection{Response to Input Perturbations}
\label{sec:exp-perturbations}

The previous experiment keeps the input fixed. We next construct a task in which the correct shared memory changes in a known way. We use the largest connected components of Cora and Actor. Let $X\in\mathbb{R}^{n\times 8}$ denote the node signals. Each row \(x_i\) is an eight-dimensional Gaussian vector assigned to node \(i\). These are the values that the model must aggregate.

We also select \(64\) fixed nodes as landmarks and construct
$\Phi_G\in\mathbb{R}^{n\times 64}$. The row \(\phi_i^\top\) describes the position of node \(i\) relative to the landmarks. Before normalization, its entries are
\[
(\Phi_G)_{i\ell}
\propto
\exp\left(-\frac{d_G(i,\ell)}{\tau}\right),
\]
where \(d_G(i,\ell)\) is the graph distance from node \(i\) to landmark \(\ell\). We normalize each landmark column and apply one common scale. The model receives the concatenated node input \([\Phi_G,X]\).

The two matrices therefore have different roles. \(X\) contains the node information to be averaged, while \(\Phi_G\) determines which nodes should receive larger weights. In particular, the target for node \(i\) is
\[
y_i
=
\sum_j a_{ij}x_j,
\qquad
a_{ij}
=
\frac{\phi_i^\top\phi_j}
{\sum_k\phi_i^\top\phi_k}.
\]
Nodes with similar landmark-distance representations receive larger weights. In matrix form,
\begin{equation}
    Y(G,X)
    =
    D_G^{-1}\Phi_G\Phi_G^\top X,
    \qquad
    D_G
    =
    \operatorname{diag}
    \left(\Phi_G\Phi_G^\top\mathbf 1\right).
    \label{eq:perturbation-target}
\end{equation}

This target has the exact compact shared state
\[
S_G=\Phi_G^\top X,
\qquad
\zeta_G=\Phi_G^\top\mathbf 1.
\]
Using its own structural representation, node \(i\) reads
\[
y_i
=
\frac{\phi_i^\top S_G}
{\phi_i^\top\zeta_G}.
\]
The output is an eight-dimensional vector and generally differs across nodes. The median target-kernel mass beyond two hops is \(94.5\%\) on Cora and \(97.9\%\) on Actor.

Under feature perturbations, \(X\) changes while \(G\) and \(\Phi_G\) remain fixed. Under graph perturbations, the landmark nodes remain fixed, but the graph distances and \(\Phi_G\) are recomputed.

We train Polynormer-r, SGFormer, a deterministic fixed-random-feature NodeFormer, and DIFFormer-s. We compare two executions of the same checkpoint. The \emph{current writer} recomputes its state from the perturbed input. The \emph{recorder state} reuses the state recorded on the clean input. Both executions receive the current node features, current graph, current queries, and current local paths. They differ only in the compact writer state.

We also include an aligned writer-only control. Its queries and keys are fixed to \(\Phi_G\), and it learns only an \(8\times8\) value map and an output bias. This control checks that the task and optimizer can expose a useful response to input changes.

Feature perturbations use
\begin{equation}
    X_\sigma
    =
    \sqrt{1-\sigma^2}\,X_0+\sigma E,
    \label{eq:feature-perturbation}
\end{equation}
where \(E\) is an independent standard Gaussian matrix. Thus every marginal remains standard Gaussian, and \(\sigma=1\) replaces all features. Topology perturbations use connected degree-preserving double-edge swaps. We recompute both \(\Phi_G\) and the target after each change. The horizontal axis reports the realized fraction of changed edges.

We study two training regimes. The first trains one model on one clean graph and feature matrix. The second trains one shared checkpoint on \(16\) inputs: eight Cora inputs and eight Actor inputs. Each graph uses
\[
\begin{split}
(\sigma,r)\in\{&
(0,0),(.05,0),(.1,0),(0,.05),(0,.1),\\
&(.05,.05),(.1,.05),(.05,.1)\},
\end{split}
\]
where \(r\) is the requested rewiring severity. Training and evaluation perturbations use disjoint random seeds.

Each result uses three optimization seeds and three perturbation draws per nonzero shift. We average perturbation draws within each optimization seed and report test-normalized RMSE.

Results are shown in \cref{fig:input-perturbations}.

All four source-model configurations fit the clean static task better than a train-mean predictor. Under feature replacement, however, their current-writer and recorder-state curves nearly overlap. Their response slopes are close to zero. The models learned useful predictors for the clean input, but did not learn the intended feature-to-memory rule.

Topology changes give mixed results. Recomputing is nearly neutral for SGFormer and Polynormer-r. The recorder state is better for NodeFormer. DIFFormer-s benefits from recomputation on Cora but becomes unstable under strong Actor rewiring. The aligned control gives the expected opposite result: recomputation remains nearly exact, while its recorder state fails as the input moves away from the clean graph.

Training on \(16\) inputs does not produce a general improvement. SGFormer improves consistently across both perturbation types and both graphs, but its current-writer and recorder-state curves still overlap closely. The improvement therefore comes from the complete model and cannot be attributed to the covered writer alone. The other architectures do not improve consistently. These results show that fitting one static input does not identify whether a useful input-conditioned writer has been learned.

\subsection{A Static Target With Known Long-Range Dependence}
\label{app:static-long-range}

We reuse the task from \cref{sec:exp-perturbations}, but keep one input fixed
throughout training and evaluation. We use the largest connected components of
Cora, Actor, Amazon-Photo, and PubMed. Each node receives an
eight-dimensional Gaussian signal, $X\in\mathbb{R}^{n\times 8}$. We choose
$64$ fixed landmarks and construct $\Phi_G\in\mathbb{R}^{n\times 64}$ from
graph distance. Before column normalization,
\[
    (\Phi_G)_{i\ell}
    \propto
    \exp\!\left(-\frac{d_G(i,\ell)}{\tau}\right),
    \qquad \tau=0.75.
\]
The model input is $[\Phi_G,X]$. The target is
\begin{equation}
    Y(G,X)
    =D_G^{-1}\Phi_G\Phi_G^\top X,
    \qquad
    D_G=\operatorname{diag}
    \left(\Phi_G\Phi_G^\top\mathbf 1\right).
    \label{eq:static-long-range-target}
\end{equation}
It has the exact compact state
\[
    S_G=\Phi_G^\top X,
    \qquad
    \zeta_G=\Phi_G^\top\mathbf 1.
\]
Node $i$ reads $y_i=(\phi_i^\top S_G)/(\phi_i^\top\zeta_G)$. The output is
node-specific, while most target mass lies outside a two-hop neighborhood on
three of the four graphs.

\begin{table}[H]
\centering
\caption{\textbf{Graph structures used by the static long-range task.}
Diameter is the exact unweighted diameter of the largest connected component.
The last column is the median target-kernel mass outside two hops.}
\label{tab:static-long-range-graphs}
\small
\setlength{\tabcolsep}{7pt}
\begin{tabular}{@{}lrrr@{}}
\toprule
Graph & Nodes & Diameter & Mass beyond two hops \\
\midrule
Cora          & $2{,}485$  & $19$ & $94.5\%$ \\
Actor         & $7{,}600$  & $12$ & $97.9\%$ \\
Amazon-Photo  & $7{,}487$  & $11$ & $77.6\%$ \\
PubMed        & $19{,}717$ & $18$ & $99.8\%$ \\
\bottomrule
\end{tabular}
\end{table}

We compare the original SGFormer, DIFFormer-s, NodeFormer, and Polynormer-r with two
counterparts. The first removes the global writer and optimizes one
reader-compatible shared state at each layer. It retains all readers, local
paths, and residual paths. The second is a GNN with exactly two
message-passing layers (GCN) and a nodewise MLP head.

We also report two simple baselines. The train-mean baseline predicts the mean
training target at every node. The nearest-training-node baseline copies the
target of the closest training node in graph distance. It uses no validation
or test targets. We use a deterministic $60/20/20$ split stratified by each
node's nearest landmark. We normalize targets using training-node statistics. \Cref{fig:static-long-range} reports normalized
test RMSE and $95\%$ confidence intervals over five disjoint final seeds.

Directly optimized memory is in the same range as the original writer and has
lower mean RMSE in eight of the sixteen model-graph comparisons. There is no
consistent advantage for either parameterization. The two-hop GNN obtains
$R^2$ values of $.911$, $.873$, $.512$, and $.934$ on Cora, Actor,
Amazon-Photo, and PubMed. It improves on nearest-training-node prediction on
three graphs, but not on Amazon-Photo. Thus a local GNN can learn a strong
graph-specific predictor even when most target dependence lies beyond its
forward receptive field. This does not show that it learns the dynamic global
rule. It only concerns one fixed input.

\paragraph{A local model can learn the fixed global information.}
This task gives a simple example in which a local model can directly optimize
the required global information. When the input changes, the correct memory
\(S_G=\Phi_G^\top X\) must be recomputed. On one fixed input, however, the same
matrix can be learned as ordinary shared weights.

Set \(Q_G=D_G^{-1}\Phi_G\in\mathbb R^{n\times 64}\). The target can then be
written as \(Y=Q_GS_G\), where \(S_G\in\mathbb R^{64\times 8}\). Consider the
zero-hop local model \(\widehat Y(W)=Q_GW\). Here,
\(W\in\mathbb R^{64\times 8}\) is a shared nodewise weight matrix.

\begin{proposition}[Learning the fixed landmark memory]
\label{prop:landmark-memory-learning}
Let \(L\) denote the training nodes. Optimize \(W\) using
\(\mathcal L(W)=\frac12\|Q_{G,L}W-Y_L\|_F^2\). If \(Q_{G,L}\) has full column
rank and \(0<\eta<2/\|Q_{G,L}\|_2^2\), gradient descent converges from every
initialization to \(W_\infty=S_G\). It therefore recovers the target on every
node.
\end{proposition}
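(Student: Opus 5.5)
This is a direct instance of \cref{prop:affine-state}, applied column by column or equivalently with the Frobenius norm. I would identify the state $c$ with $W$, the labeled reader $A_L$ with $Q_{G,L}$, the bias $b_L$ with $0$, and the held-out reader $A_U$ with $Q_{G,U}$ (or with all of $Q_G$). The objective is a sum of eight independent least-squares problems, one per output column of $W$, and each has the same matrix $Q_{G,L}$. Gradient descent on the Frobenius loss therefore decouples into gradient descent on each column with the same step size $\eta$. Alternatively, one can vectorize and use $I_8\otimes Q_{G,L}$, whose spectral norm equals $\|Q_{G,L}\|_2$. Either way the step-size condition $0<\eta<2/\|Q_{G,L}\|_2^2$ matches the hypothesis of \cref{prop:affine-state}.

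The first step is to exhibit a minimizer that plays the role of the replay state $c^*$. Since $Y=Q_GS_G$ holds exactly on every node, in particular $Y_L=Q_{G,L}S_G$, so $\mathcal L(S_G)=0$. Hence $S_G$ is a global minimizer and satisfies the normal equation $Q_{G,L}^\top(Q_{G,L}S_G-Y_L)=0$.

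Next I would invoke \cref{eq:affine-state-limit}, which gives
\[
W_\infty=S_G+P_{\ker(Q_{G,L})}(W_0-S_G),
\]
with the projection applied columnwise. Full column rank means $\ker(Q_{G,L})=\{0\}$, so $W_\infty=S_G$ for every initialization $W_0$. For a self-contained argument, I would instead write the error recursion
\[
W_{t+1}-S_G=(I-\eta Q_{G,L}^\top Q_{G,L})(W_t-S_G).
\]
Full column rank makes $Q_{G,L}^\top Q_{G,L}$ positive definite with eigenvalues in $(0,\|Q_{G,L}\|_2^2]$. The step-size bound then places every eigenvalue of the iteration matrix in $(-1,1)$, so the error contracts geometrically to zero.

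Finally, recovery on every node follows because $\widehat Y(W_\infty)=Q_GS_G=Y$. Equivalently, condition \cref{eq:state-observability} holds trivially when the kernel is zero.

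I do not expect a genuine obstacle. The only point needing care is that $D_G$ is invertible, so that $Q_G$ is well defined. This holds because $\Phi_G$ has positive entries, which makes every row sum of $\Phi_G\Phi_G^\top$ positive. The other subtlety is that the matrix-valued state reduces cleanly to the vector setting of \cref{prop:affine-state}, which the columnwise decoupling handles.
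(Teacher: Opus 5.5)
Your proposal is correct and follows the paper's own argument: you use $Y_L=Q_{G,L}S_G$ to get the error recursion $W_{t+1}-S_G=(I-\eta Q_{G,L}^\top Q_{G,L})(W_t-S_G)$, then conclude from positive definiteness of $Q_{G,L}^\top Q_{G,L}$ and the step-size bound that the error contracts to zero. Your columnwise reduction to Proposition~\ref{prop:affine-state} and your check that $D_G$ is invertible are extra details the paper omits, but they do not change the argument.
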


\begin{proof}
Since \(Y_L=Q_{G,L}S_G\), the gradient-descent error satisfies
\[
    W_{t+1}-S_G
    =
    \left(I-\eta Q_{G,L}^\top Q_{G,L}\right)(W_t-S_G).
\]
Full column rank makes \(Q_{G,L}^\top Q_{G,L}\) positive definite. Under the
stated step size, all error components converge to zero.
\end{proof}

This result uses the precomputed structural map $Q_G$. It proves that the fixed memory effect is directly learnable, but it does not claim that the two-hop GNN follows the same optimization path.

The numerical rank of \(Q_{G,L}\) is \(64\) on all four graphs. Its condition
numbers are \(3.40\), \(9.41\), \(22.79\), and \(3.28\) on Cora, Actor,
Amazon-Photo, and PubMed. Thus the proposition applies to every input used in
this experiment.

Target standardization does not change the result. Since
\(Q_G\zeta_G=\mathbf 1\), a standardized target can be written as
\(\widetilde Y=Q_G(S_G-\zeta_G\mu^\top)\Sigma^{-1}\), where \(\mu\) and
\(\Sigma\) are the training-target mean and scale. If \(Q_{G,L}\) is rank deficient, Proposition~\ref{prop:affine-state} gives the
more general condition
\(\ker(Q_{G,L})\subseteq\ker(Q_{G,U})\). Every memory direction used by
held-out nodes must also be visible to the training nodes.

\subsection{City-Networks Benchmarks}
\label{app:city-memory}

City-Networks provides a long-range node-classification task on large road graphs \citep{liang2026citynetworks}. The ten classes are obtained from weighted node eccentricity within 16 hops. We use the provided features, labels, and fixed $10/10/80$ train, validation, and test split. We report results on Paris, Shanghai, and Los Angeles.

We compare each original model with a directly optimized shared-memory counterpart. The counterpart removes the global writer and replaces its output at each layer with a same-shaped trainable state. The state is initialized independently and is not initialized from replay. The original readers and all local, relational, and residual paths remain unchanged. Matched seeds preserve the initialization of every retained parameter.

For SGFormer, we use the architecture and main numerical settings released with City-Networks. NodeFormer uses the same depth, width, dropout, and training budget as a controlled adaptation. It is not a published City-specific configuration. Both models use our common training and evaluation loop.

Across the six comparisons in \cref{fig:city-memory}, the difference in mean accuracy ranges from $-1.47$ to $+1.56$ percentage points. All displayed marginal confidence intervals overlap. These preliminary results show that the shared memories can be optimized to similar accuracy on these fixed graphs.

\subsection{Hyperparameters and Reproducibility}
\label{sec:exp-hyperparameters}

All experiments use full-batch training. Graphs are made undirected. Cora and PubMed use the PyG public Planetoid split \citep{yang2016planetoid}, Amazon-Photo uses the stored Polynormer split \citep{platonov2023critical}, and Actor uses public split \(0\) \citep{lim2021large}. Unless stated otherwise, optimization uses Adam, cross-entropy for node classification, and gradient clipping at \(5\).

The exact replay uses \(100\) training epochs, learning rate \(5\times10^{-3}\), weight decay \(10^{-5}\), and early stopping after ten validation checks. Dropout is disabled because replay is tested under deterministic evaluation.

\begin{table*}[t]
\centering
\caption{\textbf{Exact replay configurations.} Depth lists local and global layers when both are present. RF is the number of NodeFormer random features.}
\label{tab:replay-hparams}
\small
\setlength{\tabcolsep}{7pt}
\scalebox{0.9}{
\begin{tabular}{@{}lccccc@{}}
\toprule
Model & Hidden width & Depth & Heads & RF & Additional setting \\
\midrule
Polynormer-r & \(64\) & \(1\) local \(+\) \(1\) global & \(1\) & -- & Layer normalization \\
SGFormer & \(64\) & \(4\) GCN \(+\) \(1\) global & \(1\) & -- & Graph weight \(0.8\) \\
NodeFormer & \(32\) & \(2\) kernel layers & \(4\) & \(30\) & Fixed RF basis, no Gumbel or edge loss \\
DIFFormer-s & \(64\) & \(2\) simple-kernel layers & \(1\) & -- & Local graph branch retained \\
\bottomrule
\end{tabular}
}
\end{table*}

\Cref{tab:static-hparams} gives the configurations used in the static learned-memory comparison. For SGFormer, depth, dropout, and weight decay are written as local/global, GCN/Transformer, and base/writer values. For NodeFormer, the \(H/\mathrm{RF}/r_b\) column gives heads, random features, and relational-bias order. The NodeFormer rows show the exact-count geometry used in \cref{fig:learned-memory-bars}. Learned states always use zero weight decay.

\begin{table*}[t]
\centering
\caption{\textbf{Original and learned-memory hyperparameters.}
\(E/P\) gives the maximum number of epochs and early-stopping patience.
For Polynormer-r, layers give local \(+\) global layers, dropout gives
input/local/global dropout, and \(E/P\) gives local-stage epochs \(+\)
global-stage maximum/patience. Its \(d\) is the width per head. The final
column gives the learned-state learning rate and initialization scale.}
\label{tab:static-hparams}
\scriptsize
\setlength{\tabcolsep}{3.6pt}
\resizebox{\textwidth}{!}{\begin{tabular}{@{}llcccccccc@{}}
\toprule
Model & Graph & \(d\) & Layers & \(H/\mathrm{RF}/r_b\) & Dropout & LR & Weight decay & \(E/P\) & State LR/init. \\
\midrule
SGFormer & Cora          & 64 & \(4+1\) & \(1/--/--\) & \(.5/.2\) & \(10^{-2}\) & \(5{\times}10^{-4}/10^{-3}\) & \(1000/200\) & \(.01/.10\) \\
SGFormer & Amazon-Photo  & 64 & \(6+1\) & \(1/--/--\) & \(.5/.3\) & \(10^{-3}\) & \(5{\times}10^{-5}/5{\times}10^{-5}\) & \(1000/120\) & \(.01/.10\) \\
SGFormer & PubMed        & 64 & \(4+1\) & \(1/--/--\) & \(.5/.3\) & \(5{\times}10^{-3}\) & \(5{\times}10^{-4}/10^{-2}\) & \(1000/200\) & \(.10/.10\) \\
SGFormer & Actor         & 64 & \(2+1\) & \(1/--/--\) & \(.2/.2\) & \(10^{-3}\) & \(5{\times}10^{-4}/10^{-2}\) & \(1000/200\) & \(.01/.10\) \\
\midrule
NodeFormer & Cora         & 64 & 2 & \(4/64/3\) & \(.5\) & \(5{\times}10^{-3}\) & \(5{\times}10^{-4}\) & \(1000/200\) & \(.001/.01\) \\
NodeFormer & Amazon-Photo & 64 & 2 & \(4/64/0\) & \(.3\) & \(10^{-3}\) & \(5{\times}10^{-5}\) & \(1000/120\) & \(.001/.01\) \\
NodeFormer & PubMed       & 32 & 2 & \(4/32/2\) & \(.3\) & \(10^{-3}\) & \(5{\times}10^{-4}\) & \(1000/200\) & \(.001/.01\) \\
NodeFormer & Actor        & 32 & 2 & \(1/32/2\) & \(0\) & \(10^{-4}\) & \(5{\times}10^{-2}\) & \(1000/200\) & \(.001/.01\) \\
\midrule
DIFFormer-s & Cora         & 64 & 8 & \(1/--/--\) & \(.2\) & \(10^{-3}\) & \(10^{-2}\) & \(500/100\) & \(1.0/.01\) \\
DIFFormer-s & Amazon-Photo & 64 & 4 & \(1/--/--\) & \(.2\) & \(10^{-3}\) & \(5{\times}10^{-5}\) & \(1000/120\) & \(.01/.01\) \\
DIFFormer-s & PubMed       & 64 & 4 & \(1/--/--\) & \(.2\) & \(5{\times}10^{-4}\) & \(10^{-3}\) & \(500/100\) & \(.01/.01\) \\
DIFFormer-s & Actor        & 64 & 2 & \(1/--/--\) & \(.5\) & \(10^{-3}\) & \(10^{-1}\) & \(300/80\) & \(.10/.01\) \\
\midrule
Polynormer-r & Cora         & 64 & \(3+2\) & \(1/--/--\) & \(.15/.5/.5\) & \(10^{-2}\) & \(5{\times}10^{-4}\) & \(100+500/60\) & \(.01/.02\) \\
Polynormer-r & Amazon-Photo & 64 & \(7+2\) & \(8/--/--\) & \(.2/.7/.5\) & \(10^{-3}\) & \(5{\times}10^{-5}\) & \(200+1000/120\) & \(.01/.02\) \\
Polynormer-r & PubMed       & 64 & \(3+2\) & \(1/--/--\) & \(.15/.5/.5\) & \(5{\times}10^{-3}\) & \(5{\times}10^{-4}\) & \(100+500/60\) & \(.05/.02\) \\
Polynormer-r & Actor        & 64 & \(2+1\) & \(1/--/--\) & \(.15/.5/.5\) & \(10^{-3}\) & \(0\) & \(100+500/60\) & \(.05/.05\) \\
\bottomrule
\end{tabular}}
\end{table*}

SGFormer uses graph weight \(0.8\). Amazon-Photo enables batch normalization, residuals, and the hidden activation; the other SGFormer configurations disable them. NodeFormer uses \(\tau=0.25\), batch normalization, and residual connections. Gumbel sampling and edge loss are disabled in both paired configurations. DIFFormer-s uses \(\alpha=0.5\), batch normalization, residual connections, and no learned layer weighting. Its graph branch is active except on Actor.

The perturbation-task settings are listed in \cref{tab:shift-task-hparams}. The graph, landmark set, clean signal, split, and perturbation draws are cached and shared across model configurations.

\begin{table*}[t]
\centering
\caption{\textbf{Perturbation-task hyperparameters.}}
\label{tab:shift-task-hparams}
\small
\setlength{\tabcolsep}{6pt}
\begin{tabular}{@{}lp{0.76\textwidth}@{}}
\toprule
Parameter & Value \\
\midrule
Graphs & Largest connected components of Cora and Actor, with \(2485\) and \(7600\) nodes \\
Split & Nearest-landmark-stratified \(60/20/20\); train/validation/test counts are \(1467/480/538\) and \(4530/1526/1544\) \\
Structural map & \(64\) farthest-point landmarks, exponential scale \(\tau=0.75\) \\
Node signal & Eight-dimensional standard Gaussian \\
Feature shifts & \(\sigma\in\{0,.1,.2,.4,.6,.8,1\}\) in \cref{eq:feature-perturbation} \\
Topology shifts & Connected degree-preserving swaps with requested severities \(\{0,.01,.02,.05,.1,.2,.4,.8,1\}\); results use the realized changed-edge fraction \\
Perturbation draws & Three per nonzero shift, averaged within each training run \\
Multi-input training & Eight inputs per graph, \(16\) in total \\
Evaluation metric & Test RMSE after normalizing each target dimension using training-node statistics \\
\bottomrule
\end{tabular}
\end{table*}

\begin{table*}[t]
\centering
\caption{\textbf{Perturbation-model hyperparameters.} One-input budget gives maximum epochs and patience in validation checks. Multi-input models use \(1600\) steps, validation every \(80\) steps, and patience \(10\) checks. All source-model rows use dropout \(0\).}
\label{tab:shift-model-hparams}
\small
\setlength{\tabcolsep}{5.5pt}
\begin{tabular}{@{}lccccc@{}}
\toprule
Model & Width & Depth & Heads/RF & LR/weight decay & One-input budget \\
\midrule
Polynormer-r & 64 & \(1\) local \(+\) \(1\) global & \(1/--\) & \(.005/10^{-5}\) & \(1500/50\) \\
SGFormer & 64 & \(4\) GCN \(+\) \(1\) global & \(1/--\) & \(.005/10^{-5}\) & \(300/30\) \\
NodeFormer & 32 & \(2\) kernel layers & \(4/30\) & \(.005/10^{-5}\) & \(1000/50\) \\
DIFFormer-s & 64 & \(2\) simple-kernel layers & \(1/--\) & \(.005/10^{-5}\) & \(1000/30\) \\
Writer-only control & 8 & \(8{\times}8\) value map \(+\) bias & -- & \(.05/0\) & \(3000/50\) \\
\bottomrule
\end{tabular}
\end{table*}

\begin{table}[H]
\centering
\caption{\textbf{Hyperparameters for the City-Networks experiment.}
State learning rates are listed as Paris/Shanghai/Los Angeles.}
\label{tab:city-hparams}
\scriptsize
\setlength{\tabcolsep}{4.5pt}
\begin{tabular}{@{}lll@{}}
\toprule
Setting & SGFormer & NodeFormer \\
\midrule
Hidden width & $64$ & $64$ \\
Layers & $16$ local $+$ $16$ global & $16$ \\
Attention heads & $1$ & $1$ \\
Dropout & $0.2$ & $0.2$ \\
Normalization / residual & on / on & on / on \\
Graph weight & $0.8$ & -- \\
Random features & -- & $64$ \\
Relational-bias order & -- & $2$ \\
Temperature $\tau$ & -- & $0.25$ \\
Gumbel sampling / edge loss & -- & off / off \\
\midrule
Optimizer & Adam & Adam \\
Loss & Cross-entropy & Cross-entropy \\
Retained-parameter learning rate & $5\cdot10^{-4}$ & $5\cdot10^{-4}$ \\
Weight decay, retained / state & $10^{-5}$ / $0$ & $10^{-5}$ / $0$ \\
State learning-rate grid & $\{10^{-3},10^{-2},10^{-1}\}$ & $\{10^{-3},10^{-2},10^{-1}\}$ \\
Selected state learning rate & $.01/.10/.01$ & $.001/.001/.001$ \\
State initialization scale & $0.01$ & $0.01$ \\
Maximum epochs & $20{,}000$ & $20{,}000$ \\
Evaluation interval / patience & $100/3{,}000$ & $100/3{,}000$ \\
Gradient-norm clipping & $5$ & $5$ \\
Active values, original / state & $205{,}130/205{,}146$ & $271{,}530/271{,}530$ \\
\bottomrule
\end{tabular}
\end{table}

\end{document}